
\documentclass[letterpaper, 10 pt, conference]{ieeeconf}  

\IEEEoverridecommandlockouts                              

\overrideIEEEmargins                                      



\usepackage{graphicx} 
\usepackage{epsfig} 
\usepackage{amsmath} 
\usepackage{amssymb}  
\usepackage[T1]{fontenc}
\usepackage{amsfonts} 
\usepackage{import}
\usepackage{xcolor}
\usepackage{float}
\usepackage{cite}
\usepackage[caption=false,font=footnotesize]{subfig}

\newtheorem{theorem}{Theorem}
\newtheorem{lemma}{Lemma}
\newtheorem{problem}{Problem}
\newtheorem{remark}{Remark}
\newtheorem{example}{Example}
\newtheorem{definition}{Definition}

\DeclareMathOperator*{\argmin}{argmin}

\title{\LARGE \bf
Least Restrictive Hyperplane Control Barrier Functions 
}

\author{Mattias Trende and Petter Ögren
\thanks{The authors are with the Robotics, Perception and Learning Lab., School of Electrical Engineering and Computer Science, Royal Institute of Technology (KTH), SE-100 44 Stockholm, Sweden, {\tt\small \{mtrende, petter\}@kth.se}}%
}

\begin{document}

\maketitle
\thispagestyle{empty}
\pagestyle{empty}

\begin{abstract}

Control Barrier Functions (CBFs) can provide provable safety guarantees for dynamic systems. However, finding a valid CBF for a system of interest is often non-trivial, especially for systems having low computational resources, higher-order dynamics, and moving close to obstacles of complex shape. A common solution to this problem is to use a purely distance-based CBF. In this paper, we study Hyperplane CBFs (H-CBFs), where a hyperplane separates the agent from the obstacle. First, we note that the common distance-based CBF is a special case of an H-CBF where the hyperplane is a supporting hyperplane of the obstacle that is orthogonal to a line between the agent and the obstacle. Then we show that a less conservative CBF can be found by optimising over the orientation of the supporting hyperplane, in order to find the Least Restrictive Hyperplane CBF. This enables us to maintain the safety guarantees while allowing controls that are closer to the desired ones, especially when moving fast and passing close to obstacles. We illustrate the approach on a double integrator dynamical system with acceleration constraints, moving through a group of arbitrarily shaped static and moving obstacles.

\end{abstract}

\section{Introduction}

Safety is crucial for many robotic systems, whether they are fully autonomous or include humans-in-the-loop. Over the last two decades, Control Barrier Functions (CBFs) have gained popularity as a means to ensure safety in robotics \cite{prajna_framework_2007, ames_control_2014, ames_control_2019}. The role of a CBF is typically to serve as a safety filter. Given the state, the CBF can provide a set of safe control actions, and given a desired control action and a CBF, one can determine the safe control action that is closest to the desired one.

The difficult part of using CBFs to ensure safety is often to find a valid CBF \cite{hsuSafetyFilterUnified2024}. For low-dimensional systems with simple obstacle geometries, it is often possible to derive a CBF analytically \cite{chen_backup_2021}, but for more complex geometries and higher-dimensional systems, a closed-form solution might be quite conservative, while numerical solutions scale poorly with the increasing complexity \cite{hsuSafetyFilterUnified2024}. 
A common solution to this problem is to utilise some type of approximate CBF. While there are machine learning approaches to the problem \cite{yu_sequential_2023}, using a black box model to guarantee safety can create additional challenges. 

A very popular, but conservative, approach is the distance-based CBF, used in e.g., \cite{borrmann_control_2015, wang_safety_2017, molnar_collision_2025, funada_collision_2025, thyri_reactive_2020}.
The key idea is to keep track of the distance to the closest part of the obstacle, and make sure it stays positive.
It turns out that the distance-based CBF is equivalent to positioning a supporting hyperplane at the closest point of the obstacle, orthogonal to the line between the obstacle and the agent, and applying a CBF that guarantees that the agent stays on the right side of this hyperplane. We call this method the Orthogonal Hyperplane-CBF (OH-CBF), and the more general approach of using any supporting plane a Hyperplane-CBF (H-CBF).

By using an H-CBF, the problem dimensionality is essentially reduced by projecting the system's movement orthogonally to the plane \cite{borrmann_control_2015}. This enables the use of an analytically derived one-dimensional CBF for higher-dimensional systems to guarantee safety.
By properly choosing the hyperplane, you can handle a large set of obstacle geometries, such as ellipses and polygons \cite{borrmann_control_2015, funada_collision_2025, liu_safety-critical_2025}.
The downside of using hyperplanes to describe obstacles is that it can be quite conservative, significantly restricting the motion of the vehicle close to the obstacle.  
However, in this paper, we note that conservativeness is only a problem if it prevents the agent from doing what it wants. 
Thus, we propose to find \emph{the least restrictive H-CBF}  as follows.

First, we note that the standard CBF approach \cite{ames_control_2019} involves \emph{first} choosing a CBF, and \emph{then}, at each timestep, given a desired (possibly non-safe) control
$\mathbf{u}_{\text{des}}$,
finding a safe control $\mathbf{u}$ that minimises $||\mathbf{u}- \mathbf{u}_{\text{des}} ||$.

\begin{figure}[t!]
    \centering
    \def\svgwidth{0.35\textwidth}\footnotesize
    \import{Images/}{intro_ex.pdf_tex}
    \caption{An agent $A$ with velocity $\mathbf{v}_A$ in front of a general obstacle $O$. Five of the infinitely many supporting hyperplanes, parameterised by some $\theta$, are shown ($P_{1}-P_5$). Note that $P_5$ is an invalid choice of hyperplane for a CBF, as the agent and the obstacle are on the same side of the hyperplane. If the goal is at $G_1$, an H-CBF based on $P_4$ is the least restrictive, whereas if the goal is at $G_2$, an H-CBF based on $P_1$ or $P_2$ might be the least restrictive. The key idea in this paper is that given the desired action of the agent $\mathbf{u}_{\text{des}}$ we optimise over a family of CBFs using $\theta$, to find the least restrictive H-CBF, allowing a safe control $\mathbf{u}$ that is as close as possible to $\mathbf{u}_{\text{des}}$.}
    \label{fig:page1}
\end{figure}

Then, the key part of the proposed approach is that instead of picking a single CBF, we pick a family of CBFs, parameterised by some $\theta$, and then, in each time step, pick the individual CBF from the family that enables the smallest deviation from $\mathbf{u}_{\text{des}}$, i.e., the combined choice of $(\theta, \mathbf{u})$ that makes $\mathbf{u}$ safe, and minimises $||\mathbf{u}- \mathbf{u}_{\text{des}} ||$. We call this approach the \emph{Least Restrictive Hyperplane} CBF (LRH-CBF).

An illustration of the approach is provided in Figure~\ref{fig:page1}. The dashed lines represent different choices of $\theta$. Now, imagine that the agent needs to go to $G_1$, then the H-CBF corresponding to $P_4$ might be best, while if the destination is at $G_2$, $P_4$ would be a very restrictive choice.  Note also that $P_2$ corresponds to the minimum distance OH-CBF, which will most likely force the agent to brake, even though it is not on a collision course with the obstacle.


Thus, the main contribution of this paper is a generalisation of the distance-based CBF approach from \cite{borrmann_control_2015, wang_safety_2017, molnar_collision_2025, funada_collision_2025, thyri_reactive_2020}, in which, instead of choosing an OH-CBF, we choose the LRH-CBF with respect to the desired control action, thereby minimising the difference between the chosen safe control and the desired control, making the approach less conservative.
We also investigate how performance and computational requirements depend on the size of the family of hyperplanes we optimise over. Since the popular OH-CBF is a special case of LRH-CBF, it turns out that the computational complexity is comparable, but the performance is significantly better.

In the remainder of this paper,
Section \ref{section: Related work} looks further into the connections to previous work, while
Section \ref{section: Background} summarises the theory of CBFs as well as the double integrator model with a corresponding H-CBF.  Then, Section \ref{section: Method} describes the proposed approach. Finally, the simulation results are presented in Section \ref{section: Experiments}, and the paper is concluded in Section \ref{section: Conclusion}.

\section{Related Work}
\label{section: Related work}

There has been extensive work on CBFs for different representations of obstacles.
The four most common ways are circles \cite{borrmann_control_2015, wang_safety_2017}, ellipses \cite{funada_collision_2025, tooranjipour_lidar-based_2025},  polygons \cite{thirugnanam_safety-critical_2022, wu_optimization-free_2025, wei_collision_2025, notomista_reactive_2025, dai_safe_2023},
and hyperplanes \cite{borrmann_control_2015,wang_safety_2017,ghaffariSafetyVerificationUsing2018, funada_collision_2025,huang_dynamic_2025,roncero_multi-agent_2025,liu_safety-critical_2025}.
In the last category, we also include the common CBF approach that tracks the distance from the agent to the closest point on an obstacle and ensures this distance remains positive \cite{borrmann_control_2015,ghaffariSafetyVerificationUsing2018,roncero_multi-agent_2025}. Note that even though this approach does not explicitly mention hyperplanes,  it is equivalent to an H-CBF touching the closest point of the obstacle, with a normal vector pointing towards the agent.
Throughout the paper, we refer to these H-CBFs as \emph{Orthogonal} (OH-CBFs), as they are orthogonal to the line between the agent and the closest point on the obstacle.

There are other works suggesting different orientations of H-CBFs.
In \cite{funada_collision_2025}, ellipsoidal bodies are considered, and the orientation of the hyperplane is chosen to maximise the distance between the hyperplane and the other nearby ellipsoid.
In \cite{huang_dynamic_2025}, the orientation of the H-CBF was chosen based on velocity obstacles (VO). They consider a second-order dynamic system, but in order to avoid having to work with higher-order CBFs, they create the CBFs in velocity space, aligned with the VOs.
Finally, \cite{liu_safety-critical_2025} picks H-CBFs based on a safe convex polytope, generated from free space in an occupancy grid map.


We also note that our use of H-CBFs makes the proposed approach suitable for combinations with several other methods in the literature, such as work on multi-agent systems
\cite{wang_safety_2017, an_collisions-free_2022, tan_distributed_2022}, cooperative/neutral/adversarial obstacles \cite{borrmann_control_2015, martinez-baselga_avocado_2025} 
and MPC \cite{tooranjipour_lidar-based_2025, abbas_obstacle_2017}.

Finally, we note that this paper is different from all the approaches mentioned above in the sense that the choice of the H-CBF is dependent on the desired control.
With a given desired control $\mathbf{u}_{\text{des}}$, we jointly optimise over the CBF family $\theta$ and the safe control $\mathbf{u}$ to minimise
$||\mathbf{u}- \mathbf{u}_{\text{des}} ||$.
Thus, it finds the LRH-CBF, in terms of enabling a safe control as close as possible to the desired control.
None of the other papers considers the desired control when choosing the CBF, instead, they base it purely on the current state.

\section{Background}
\label{section: Background}

In this section, we first review some results from convex analysis, then summarise continuous and discrete CBF theory, followed by a description of a double integrator model and a common H-CBF for it.

\subsection{Convex Analysis}
\label{sec_convex_analysis}

\begin{definition}
A supporting hyperplane to a set $S \subset \mathbb{R}^n$
at a point $\mathbf{x}_0 \in \partial S$ (the boundary of $S$)
is a hyperplane that:
passes through the point $\mathbf{x}_0$ and leaves the entire set 
 on one side of the hyperplane (or on the hyperplane itself).
\end{definition}

\begin{theorem}[Supporting Hyperplane in Every Direction]
\label{th_every_direction}
    If a set $S \subset \mathbb{R}^n$ is closed, convex, bounded, and has nonempty interior, then for every nonzero direction $\mathbf{n} \in \mathbb{R}^n$ there exists a point $\mathbf{p}^* \in S$ such that
    \begin{equation}
        \mathbf{n}^T \mathbf{p}^* = \max_{\mathbf{p} \in S} \mathbf{n}^T \mathbf{p}
    \end{equation}
    and the hyperplane $\{\mathbf{p}: \mathbf{n}^T \mathbf{p} = \mathbf{n}^T \mathbf{p}^*\}$ supports $S$ at $\mathbf{p}^*$.
\end{theorem}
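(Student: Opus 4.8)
The plan is to reduce the statement to the classical extreme-value (Weierstrass) theorem together with an elementary boundary argument. First I would observe that the hypotheses \emph{closed} and \emph{bounded} make $S$ a compact subset of $\mathbb{R}^n$ by the Heine--Borel theorem, and that the map $\mathbf{x} \mapsto \mathbf{r}^T \mathbf{x}$ is continuous (indeed linear). Applying Weierstrass' theorem to this continuous functional on the compact set $S$ immediately yields a maximizer $\mathbf{x}^* \in S$ attaining $\sup_{\mathbf{x}\in S} \mathbf{r}^T\mathbf{x}$, which establishes the displayed equation. This part is routine and uses neither convexity nor nonempty interior.

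The remaining work is to verify that the hyperplane $H = \{\mathbf{x}: \mathbf{r}^T\mathbf{x} = \mathbf{r}^T\mathbf{x}^*\}$ genuinely \emph{supports} $S$ at $\mathbf{x}^*$ in the sense of the preceding Definition, which has two clauses: that $H$ passes through $\mathbf{x}^*$ and leaves $S$ on one side, and that $\mathbf{x}^* \in \partial S$. The first clause is immediate from optimality. By construction $\mathbf{r}^T\mathbf{x} \le \mathbf{r}^T\mathbf{x}^*$ for every $\mathbf{x}\in S$, so $S$ lies entirely in the closed half-space $\{\mathbf{x}: \mathbf{r}^T\mathbf{x} \le \mathbf{r}^T\mathbf{x}^*\}$, while $\mathbf{x}^*\in H$ trivially.

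For the boundary claim I would argue by contradiction, and this is where the \emph{nonzero direction} hypothesis on $\mathbf{r}$ is essential. Suppose $\mathbf{x}^*$ were an interior point of $S$; then some ball $B(\mathbf{x}^*,\epsilon)\subseteq S$, and the perturbed point $\mathbf{y} = \mathbf{x}^* + \tfrac{\epsilon}{2}\,\mathbf{r}/\|\mathbf{r}\|$ lies in $S$ yet satisfies $\mathbf{r}^T\mathbf{y} = \mathbf{r}^T\mathbf{x}^* + \tfrac{\epsilon}{2}\|\mathbf{r}\| > \mathbf{r}^T\mathbf{x}^*$, contradicting maximality. Hence $\mathbf{x}^*\in\partial S$, and combining this with the half-space containment shows that $H$ supports $S$ at $\mathbf{x}^*$.

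I expect the only genuine obstacle to be bookkeeping rather than mathematics: matching each conclusion precisely to the two clauses of the supporting-hyperplane Definition, and being explicit about which hypothesis does which job (compactness for attainment, $\mathbf{r}\neq 0$ for the strict increase in the perturbation step). I would also note that convexity and nonempty interior are not logically required for this particular argument---boundedness and closedness carry the attainment, and $\mathbf{r}\neq 0$ carries the boundary step, since if $S$ had empty interior every point would lie in $\partial S$ automatically---so these hypotheses appear to be included for consistency with the surrounding development rather than out of necessity.
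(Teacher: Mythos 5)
Your proof is correct, but it takes a different route from the paper in a trivial sense: the paper offers no argument at all, simply citing Rockafellar's \emph{Convex Analysis} for the result. Your self-contained proof---Heine--Borel plus Weierstrass for attainment of the supremum, the immediate half-space containment from optimality, and the perturbation $\mathbf{y} = \mathbf{x}^* + \tfrac{\epsilon}{2}\mathbf{r}/\|\mathbf{r}\|$ to rule out $\mathbf{x}^* \in \operatorname{int} S$---is the standard elementary argument and matches the paper's two-clause definition of a supporting hyperplane exactly. What the citation buys the authors is brevity and deference to the convex-analysis canon; what your argument buys is transparency about which hypothesis does which job, and your closing observation is a genuine insight the citation hides: convexity plays no role here (it matters only later, e.g.\ for Theorem \ref{th_sep_hyperplane} and for the lemma applying this result to $\operatorname{conv}(\mathcal{O})$). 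One small caveat: your claim that attainment uses ``neither convexity nor nonempty interior'' should be softened slightly, since Weierstrass does require $S \neq \emptyset$, and in the theorem as stated nonemptiness is supplied precisely by the nonempty-interior hypothesis; the honest minimal hypotheses are $S$ nonempty, closed, bounded, and $\mathbf{r} \neq \mathbf{0}$, with your own remark that $\operatorname{int} S = \emptyset$ makes $\mathbf{x}^* \in \partial S$ automatic covering the degenerate case.
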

\begin{proof}
    See \cite{rockafellarConvexAnalysis1970}.
\end{proof}

\begin{theorem}[Separating Hyperplane]
\label{th_sep_hyperplane}
Let $C_1, C_2 \subset \mathbb{R}^n$ be two compact, convex, nonempty, disjoint sets. Then, there exists a separating hyperplane $\mathbf{a}^T \mathbf{p} = b$, such that
\begin{align}
    \mathbf{a}^T \mathbf{p} \leq b & \quad \forall \mathbf{p} \in C_1 \\
    \mathbf{a}^T \mathbf{p} \geq b & \quad \forall \mathbf{p} \in C_2. 
\end{align}
\end{theorem}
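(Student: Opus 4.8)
The plan is to prove the Separating Hyperplane Theorem by reducing the two-set problem to a single-set problem via the Minkowski difference, and then locating the separating direction through the nearest-point projection. The key observation is that two sets $C_1, C_2$ can be separated by $\mathbf{a}^T\mathbf{x} = b$ if and only if the set $D := C_2 - C_1 = \{\mathbf{y} - \mathbf{x} : \mathbf{y} \in C_2, \mathbf{x} \in C_1\}$ can be separated from the origin. First I would verify that $D$ inherits the needed properties: since $C_1, C_2$ are compact and convex, $D$ is the continuous image of the compact convex set $C_1 \times C_2$ under the map $(\mathbf{x},\mathbf{y}) \mapsto \mathbf{y} - \mathbf{x}$, so $D$ is compact and convex, and it is nonempty. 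Crucially, because $C_1$ and $C_2$ are disjoint, the origin $\mathbf{0}$ does not belong to $D$.

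Next I would find the point of $D$ closest to the origin. Since $D$ is compact and the map $\mathbf{z} \mapsto \|\mathbf{z}\|$ is continuous, the infimum of $\|\mathbf{z}\|$ over $\mathbf{z} \in D$ is attained at some $\mathbf{z}^* \in D$, and since $\mathbf{0} \notin D$ we have $\delta := \|\mathbf{z}^*\| > 0$. The separating direction is then $\mathbf{a} := \mathbf{z}^*$. The core geometric step is the standard variational inequality for projections onto a convex set: for every $\mathbf{z} \in D$ and every $t \in [0,1]$, convexity gives $(1-t)\mathbf{z}^* + t\mathbf{z} \in D$, so $\|\mathbf{z}^*\|^2 \le \|(1-t)\mathbf{z}^* + t\mathbf{z}\|^2$. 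Expanding the right-hand side, dividing by $t > 0$, and letting $t \to 0^+$ yields
\begin{equation}
    (\mathbf{z}^*)^T(\mathbf{z} - \mathbf{z}^*) \geq 0 \quad \forall \mathbf{z} \in D,
\end{equation}
which rearranges to $\mathbf{a}^T \mathbf{z} \geq \|\mathbf{z}^*\|^2 = \delta^2 > 0$ for all $\mathbf{z} \in D$.

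Finally I would translate this back to the original sets. For any $\mathbf{x} \in C_1$ and $\mathbf{y} \in C_2$, the point $\mathbf{y} - \mathbf{x}$ lies in $D$, so $\mathbf{a}^T(\mathbf{y} - \mathbf{x}) \geq \delta^2 > 0$, i.e. $\mathbf{a}^T \mathbf{y} \geq \mathbf{a}^T \mathbf{x} + \delta^2$. This means $\inf_{\mathbf{y} \in C_2} \mathbf{a}^T \mathbf{y} \geq \sup_{\mathbf{x} \in C_1} \mathbf{a}^T \mathbf{x} + \delta^2$, and these two extrema are attained because $C_1, C_2$ are compact. Choosing $b$ to be any value in the nonempty gap between $\sup_{\mathbf{x} \in C_1} \mathbf{a}^T \mathbf{x}$ and $\inf_{\mathbf{y} \in C_2} \mathbf{a}^T \mathbf{y}$ (for instance their midpoint) delivers exactly $\mathbf{a}^T \mathbf{x} \leq b$ on $C_1$ and $\mathbf{a}^T \mathbf{y} \geq b$ on $C_2$, completing the proof.

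The step I expect to be the main obstacle, or at least the one requiring the most care, is the attainment of the minimum distance and the strict positivity $\delta > 0$; this is precisely where compactness and disjointness are used, and it is the reason the theorem requires compactness rather than mere closedness (two disjoint closed convex sets can approach each other asymptotically, making the infimum zero and only yielding a weaker separation). The variational inequality itself is routine, but I would be careful to note that it is exactly the projection characterization specialized to the origin, and that the final strict gap $\delta^2$ is what guarantees genuine separation rather than the degenerate case where the hyperplane merely touches both sets.
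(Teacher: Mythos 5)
Your proof is correct and complete. There is nothing in the paper to compare it against in detail: the paper does not prove Theorem~\ref{th_sep_hyperplane} at all, but simply cites Rockafellar's \emph{Convex Analysis} for it. Your argument --- reduce to the Minkowski difference $D = C_2 - C_1$, use compactness to attain the minimum-norm point $\mathbf{z}^* \in D$ with $\delta = \|\mathbf{z}^*\| > 0$ (disjointness giving $\mathbf{0} \notin D$), derive the variational inequality $(\mathbf{z}^*)^T(\mathbf{z} - \mathbf{z}^*) \geq 0$ by perturbing along segments in $D$, and then choose $b$ in the resulting gap --- is exactly the classical projection-based proof, i.e.\ the one the cited reference itself contains, so you have in effect supplied the proof the paper outsources. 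All the steps check out: $D$ is compact and convex as the image of $C_1 \times C_2$ under a continuous linear map, the limit $t \to 0^+$ after dividing by $t$ is legitimate, and your strict gap $\delta^2 > 0$ even yields strict separation, which is stronger than the non-strict inequalities the theorem statement asks for. Your closing remark correctly locates where compactness enters; one refinement worth knowing is that strict separation survives if only one of the two sets is compact and the other merely closed (the difference set is then still closed, which is all the attainment argument needs), whereas with both sets merely closed only non-strict separation is guaranteed. The only cosmetic omission is that you never explicitly state $\mathbf{a} \neq \mathbf{0}$ --- needed for $\mathbf{a}^T\mathbf{x} = b$ to define a hyperplane --- though it is immediate from $\|\mathbf{a}\| = \delta > 0$.
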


\begin{proof}
    See \cite{rockafellarConvexAnalysis1970}.
\end{proof}

\subsection{Control Barrier Functions}

Following the CBF definition in \cite{ames_control_2019}, consider a nonlinear affine control system in the form 
$\dot{\mathbf{x}} = \mathbf{f}(\mathbf{x}) + \mathbf{g}(\mathbf{x})\mathbf{u},$
where $\mathbf{f}$ and $\mathbf{g}$ are locally Lipschitz, $\mathbf{x} \in \mathbb{R}^n$ is the system state, and $\mathbf{u} \in \mathbb{R}^m$ is the system input. Then introduce the safe set $\mathcal{C}$, defined such that for every state $\mathbf{x}(t) \in \mathcal{C}$ there exists a permissible control $\mathbf{u} \in \mathcal{U}$ that ensures $\mathbf{x}(t + \epsilon) \in \mathcal{C}$ for all $\epsilon > 0$. Consider a continuously differentiable function $h:\mathbb{R}^n \to \mathbb{R}$. If $h$ is such that
$h(\mathbf{x}) \geq 0 \ \iff \mathbf{x} \in \mathcal{C}$
and satisfies the inequality 
\begin{equation} \label{eq: h_lim}\sup_{\mathbf{u}}
    \dot{h}(\mathbf{x},\mathbf{u}) \geq -\alpha(h(\mathbf{x})),
\end{equation}
then $h$ is a CBF. Here $\alpha : \mathbb{R} \to \mathbb{R}$ is an extended class $\mathcal{K}_\infty$ function. This defines $\alpha$ to be a strictly increasing function that satisfies $\alpha(0) = 0$ and $\lim_{x\to\pm\infty}\alpha(x) = \pm \infty$. 


Given a desired control $\mathbf{u}_{\text{des}}$, the search for a safe control $\mathbf{u}$ close to the desired one can be formalised in the following optimisation problem \cite{ames_control_2019}:

\begin{problem}[CBF-QP]
\label{prob_CBF_QP}
\begin{align}
    \mathbf{u}(\mathbf{x}) = \argmin_\mathbf{u} & \ (\mathbf{u}-\mathbf{u}_{\text{des}})^T \mathbf{Q} (\mathbf{u}-\mathbf{u}_{\text{des}}) \\
    \text{s.t. } 
    & \ 
    \dot{h}(\mathbf{x},\mathbf{u}) \geq -\alpha(h(\mathbf{x})) \label{eq: cont cons CBF}\\
    & \ \mathbf{u} \in \mathcal{U},
\end{align}
where $Q$ is a positive definite matrix.
\end{problem}

It can be shown that if the dynamics are as above, and $\mathcal{U} = \mathbb{R}^m$, the problem above is indeed a Quadratic Programming problem (QP) that is easy to solve \cite{ames_control_2019}.

\subsection{Discrete Control Barrier Functions}

When implementing CBFs on discrete-time systems, such as systems with modern embedded electronics, the CBF needs to be discretised into a Discrete-CBF (D-CBF) \cite{agrawal_discrete_2017}. Instead of resulting in a QP for nonlinear affine systems, the problem becomes a Quadratically Constrained QP (QCQP).

\begin{problem}[D-CBF-QCQP]
    \begin{align}
        \mathbf{u}(\mathbf{x}) = \argmin_\mathbf{u} & \ (\mathbf{u}-\mathbf{u}_{\text{des}})^T \mathbf{Q} (\mathbf{u}-\mathbf{u}_{\text{des}}) \\
        \text{s.t. } 
        & \ 
        \Delta h(\mathbf{x}_{k+1},\mathbf{u}_k) \geq - \gamma h(\mathbf{x}_k) \label{eq: DCBF} \\
        & \ \gamma \in (0, 1]\\
        & \ \mathbf{u} \in \mathcal{U},
    \end{align}
    where $Q$ is a positive definite matrix, the difference is $\Delta h(\mathbf{x}_{k+1}) = h(\mathbf{x}_{k+1}) - h(\mathbf{x}_k)$, and the discrete dynamics follow $\mathbf{x}_{k+1} = \mathbf{f}(\mathbf{x}_{k}) + \mathbf{g}(\mathbf{x}_k) \mathbf{u}_k$. This means that the constraint in \eqref{eq: DCBF} can be rewritten as
    \begin{equation} \label{eq: DCBF alt cnstr}
        h(\mathbf{x}_{k+1}) \geq (1 - \gamma) h(\mathbf{x}_k).
    \end{equation}
\end{problem}

\subsection{Double Integrator Model}

In this paper, we consider a double-integrator dynamic model subject to acceleration constraints. Let the state be $\mathbf{x}=(\mathbf{p}, \mathbf{v})\in \mathbb{R}^{4}$ and
$\dot{\mathbf{p}}  = \mathbf{v},\  \dot{\mathbf{v}}  = \mathbf{u}$
with the acceleration constraint $||\mathbf{u}|| \leq u_{\max}$.
Then the system can be written in control-affine form
for the proper choice of $\mathbf{f}, \mathbf{g}$.

\subsection{An H-CBF for the Double Integrator}
\label{subsec: double_integrator_CBF}
For the double integrator model above, an H-CBF for an agent $(\mathbf{p}_A,\mathbf{v}_A)$ and a (possibly moving) obstacle $(\mathbf{p}_O,\mathbf{v}_O)$ is given in
\cite{borrmann_control_2015} as
\begin{equation}
    h(\mathbf{x}) = \hat{\mathbf{n}}(\theta)^T (\mathbf{p}_A - \mathbf{p}_O) - \delta(\theta) - b.
    \label{eq_hij}
\end{equation}
Above, $\hat{\mathbf{n}}(\theta)$ is the unit normal vector of the hyperplane,  $\delta(\theta)$ is the required safety margin between agent center $\mathbf{p}_A$ and the obstacle center $\mathbf{p}_O$ (with $\theta$-dependence for non disc obstacles), and $b$ is the braking distance needed for the agent to accelerate/retardate to the relative velocity of zero compared to the obstacle. The braking distance is given by
\begin{equation}
    b = 
    \begin{cases}
    \frac{(\hat{\mathbf{n}}(\theta)^T(\mathbf{v}_A - \mathbf{v}_O))^2}{2u_{\text{max}}},   & \text{for     } \hat{\mathbf{n}}(\theta)^T(\mathbf{v}_A - \mathbf{v}_O) < 0\\
    0,   & \text{for     } \hat{\mathbf{n}}(\theta)^T(\mathbf{v}_A - \mathbf{v}_O) \geq 0
\end{cases},
\label{eq_bij}
\end{equation}
since no braking is needed when the agent and obstacle are moving away from each other.

Note that above we have made a slight generalisation compared to \cite{borrmann_control_2015}, as they had no $\theta$-dependency and instead used $\hat{\mathbf{n}}(\theta)= \hat{\mathbf{n}}_{AO}$, the direction from agent to obstacle, and $\delta(\theta)=\delta$.

Finally, we note that the derivative of $h$ with respect to time, as needed in \eqref{eq: cont cons CBF}, in the case $b \neq 0$ is
\begin{equation} \label{eq: h dot exp}
\begin{split}
    \dot{h}(\mathbf{x}, \theta, \mathbf{u}) 
    &= \nabla h(\mathbf{x}, \theta) \dot{\mathbf{x}} \\
    &= \hat{\mathbf{n}}(\theta)^T(\mathbf{v}_A - \mathbf{v}_O - \mathbf{u} \frac{\hat{\mathbf{n}}(\theta)^T(\mathbf{v}_A-\mathbf{v}_O)}{u_{\text{max}}}).
\end{split}
\end{equation}

\section{Proposed Approach}
\label{section: Method}
In this section, we describe the proposed approach, show that it is a generalisation of the OH-CBF, discuss the existence and conservativeness of H-CBFs, investigate how to parameterise a family of H-CBFs, and discuss the decision of how many H-CBFs to optimise over in order to find a good LRH-CBF.

\subsection{The Optimisation Problem}
The key idea of the proposed approach is to find the optimal choice of CBF with regard to the desired control.
To do this, the given CBF $h(\mathbf{x})$ is replaced by a family of CBFs $h(\mathbf{x},\theta)$ parameterised by $\theta \in \Theta$.
Then $\theta$ and $\mathbf{u}$ are optimised jointly, to find the H-CBF which enables a safe control $\mathbf{u}$ as close as possible to $\mathbf{u}_{\text{des}}$. 
Thus, we replace Problem \ref{prob_CBF_QP} (CBF-QP) with the Least-Restrictive-CBF-Optimisation-Problem (LR-CBF-OP).

\begin{problem}[LR-CBF-OP]
\label{prob_P_CBF_QP}
\begin{align}
    (\mathbf{u}(\mathbf{x}), \theta) = \argmin_{\mathbf{u},\theta} & \ (\mathbf{u}-\mathbf{u}_{\text{des}})^T \mathbf{Q} (\mathbf{u}-\mathbf{u}_{\text{des}}) \\
    \text{s.t. } 
    & \ 
    \dot{h}(\mathbf{x},\theta,\mathbf{u}) \geq -\alpha(h(\mathbf{x},\theta)) \label{eq_CBF_constr}\\
    & \
    {h}(\mathbf{x},\theta) \geq 0  \label{eq_feasible}\\
    & \        
    \mathbf{u} \in \mathcal{U}, \ \theta \in \Theta.  \label{eq: CBF ineq} 
\end{align}
\end{problem}
Note that we have added line (\ref{eq_feasible}) to make sure that the chosen CBF is such that $\mathbf{x}$ is still feasible.
This corresponds to removing the hyperplane $P_5$ in Figure \ref{fig:page1}, for which the agent and the obstacle are on the same side.

Now this approach can, in principle, be applied to any family of CBFs, but we have chosen H-CBFs since they have closed-form solutions for second-order systems like the double integrator, and the degree of conservativeness has a clear dependence on their orientation.
By using the H-CBFs for double integrators in Section \ref{subsec: double_integrator_CBF}, we provide guarantees for both static ($\mathbf{v}_O=0$) and moving ($\mathbf{v}_O \neq 0$) obstacles.
Thus, both cases can be handled, as illustrated in Figure \ref{fig: Exp C}.

\begin{remark}[Computational Complexity]
\label{rem:non-convex}
A key advantage of the CBF approach in general is that Problem \ref{prob_CBF_QP} is a QP, with a quadratic objective function and linear constraints, which can be solved very efficiently.
Looking at Problem~\ref{prob_P_CBF_QP} above, we note that the objective is still quadratic, but the constraints are not linear in $u$ and $\theta$. One way to address situations where low computational complexity is key is to make $\Theta$ a small finite set, having $N_\theta$ elements. Then we can pick a fixed $\theta\in \Theta$, and solve Problem \ref{prob_P_CBF_QP} (which is a QP again when $\theta$ is fixed). Iterate this over the $N_\theta$ different $\theta$, and pick the best solution.
In Section \ref{section: Experiments}, we explore the performance/computation tradeoff for this approach.


\end{remark}

\subsection{Generalisation of the Closest Distance CBF}
First, we note that the proposed approach is in fact a generalisation of the very common 
distance-based CBFs used in \cite{borrmann_control_2015, wang_safety_2017, molnar_collision_2025, funada_collision_2025, thyri_reactive_2020}.
Let $\theta_{OH}$ be the parameter that makes $\hat{\mathbf{n}}(\theta_{OH})=\hat{\mathbf{n}}_{OA}$, a unit vector aligned with the shortest distance from the obstacle to the agent. Furthermore, let the set $\Theta$ be composed of this single element, $\Theta=\{\theta_{OH}\}$.
Then, as noted above, Equations (\ref{eq_hij}) and (\ref{eq_bij}) are identical to the ones in \cite{borrmann_control_2015}, and since $\theta$ is fixed, Problem \ref{prob_P_CBF_QP}
is identical to Problem \ref{prob_CBF_QP}. We call this special case OH-CBF.

Given this observation, if the set $\Theta$ is chosen such that $\theta_{OH} \in \Theta$, the proposed approach LRH-CBF will never be more restrictive than OH-CBF.
Furthermore, as noted above, if $\Theta$ is a finite set of $N_\theta$ elements, the computations required are $N_\theta$ times those of OH-CBF.

\subsection{Existense and Conservativeness}
In this section, we will first use results from convex analysis to show when an H-CBF of a given orientation exists, then analyse conservativeness by providing sufficient conditions for when an H-CBF can be found for a given safe trajectory, and provide an example of a very natural safe trajectory that would not be allowed by an OH-CBF. 

To investigate some properties of LRH-CBFs, the following two lemmas, inspired by the convex analysis results of Section~\ref{sec_convex_analysis}, state sufficient conditions for safety and conservativeness that are ensured by LRH-CBFs.

\begin{lemma}
\label{lemma:normal}
\textit{(A H-CBF for each orientation)}
    Given a compact obstacle  $O \subset \mathbb{R}^n$ with non-empty interior and a vector $\hat{\mathbf{n}}(\theta)$, there is a H-CBF $h(\mathbf{x},\theta)$ with normal $\hat{\mathbf{n}}(\theta)$ that is a supporting hyperplane to $\text{conv}(O)$, the convex hull of the obstacle.
\end{lemma}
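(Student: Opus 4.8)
The plan is to obtain the lemma as an essentially direct application of Theorem \ref{th_every_direction} (Supporting Hyperplane in Every Direction) to the convex hull of the obstacle, and then to instantiate the H-CBF template \eqref{eq_hij} using the resulting supporting point. First I would set $\mathbf{r} = \hat{\mathbf{n}}(\theta)$, which is nonzero since it is a unit direction, and verify that $\text{conv}(\mathcal{O})$ satisfies all four hypotheses of Theorem \ref{th_every_direction}, so that the theorem can be invoked with this direction.

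Verifying the hypotheses: convexity of $\text{conv}(\mathcal{O})$ is immediate from the definition of the convex hull, and boundedness follows because the convex hull of a bounded set is contained in any ball containing that set. Since $\mathcal{O} \subseteq \text{conv}(\mathcal{O})$, every interior point of $\mathcal{O}$ is also an interior point of the hull, so $\text{conv}(\mathcal{O})$ has nonempty interior. The only nontrivial property is closedness, for which I would invoke the standard fact that in $\mathbb{R}^n$ the convex hull of a compact set is again compact (a consequence of Carath\'eodory's theorem together with compactness of the relevant simplex of convex coefficients), and hence closed.

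With the hypotheses established, Theorem \ref{th_every_direction} applied in the direction $\mathbf{r} = \hat{\mathbf{n}}(\theta)$ yields a boundary point $\mathbf{x}^*(\theta) \in \text{conv}(\mathcal{O})$ with $\hat{\mathbf{n}}(\theta)^T \mathbf{x} \leq \hat{\mathbf{n}}(\theta)^T \mathbf{x}^*(\theta)$ for every $\mathbf{x} \in \text{conv}(\mathcal{O})$, and in particular for every point of $\mathcal{O}$. This is precisely a supporting hyperplane with the prescribed normal $\hat{\mathbf{n}}(\theta)$ that leaves the entire obstacle on one side. I would then define the H-CBF by taking \eqref{eq_hij} with $\mathbf{p}_j$ replaced by the supporting point $\mathbf{x}^*(\theta)$ and $\hat{\mathbf{n}}_{ij}$ replaced by $\hat{\mathbf{n}}(\theta)$, with the braking term built from the velocity component $\hat{\mathbf{n}}(\theta)^T(\mathbf{v}_i - \mathbf{v}_j)$ as in \eqref{eq_bij}. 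Its zero super-level set then lies on the safe side of the hyperplane and excludes the obstacle, while its validity as a CBF for the double-integrator dynamics follows from the construction in Section \ref{subsec: double_integrator_CBF} (cf.\ \cite{borrmann_control_2015}).

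The main obstacle is not the geometry, which is handled cleanly by Theorem \ref{th_every_direction}, but the two bookkeeping points surrounding it: first, justifying closedness of $\text{conv}(\mathcal{O})$ so that the theorem genuinely applies; and second, fixing the sign convention so that the orientation of $\hat{\mathbf{n}}(\theta)$ places the agent on the positive ($h \geq 0$) side and the obstacle on the negative side, consistent with the CBF sign conditions \eqref{eq: h_cond_1}--\eqref{eq: h_cond_2}. Provided $\hat{\mathbf{n}}(\theta)$ is chosen to point from the obstacle toward the half-space containing the agent, the supporting point returned by Theorem \ref{th_every_direction} delivers exactly the required separation, completing the argument.
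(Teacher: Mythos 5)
Your proposal is correct and takes essentially the same route as the paper's own proof: verify that $\text{conv}(\mathcal{O})$ is closed, convex, bounded, and has nonempty interior, invoke Theorem \ref{th_every_direction} in the direction $\hat{\mathbf{n}}(\theta)$, and turn the resulting supporting hyperplane into an H-CBF via \eqref{eq_hij} and \eqref{eq_bij}. The only difference is that you spell out details the paper leaves implicit, such as closedness of the hull via Carath\'eodory's theorem and the sign convention on $\hat{\mathbf{n}}(\theta)$, which is fine.
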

\begin{proof}
    The set $\text{conv}(O)$ is closed, convex, bounded, and has nonempty interior. By Theorem \ref{th_every_direction} there is a supporting hyperplane with normal $\hat{\mathbf{n}}(\theta)$ that can be turned into a H-CBF $h(\mathbf{x},\theta)$ using (\ref{eq_hij}) and (\ref{eq_bij}).
\end{proof}

\begin{lemma}\label{lemma: trajectory}
\textit{(Sufficient conditions for existence of a single H-CBF that does not interfere with a given safe trajectory)}
    Given an obstacle $O \subset \mathbb{R}^n$, and an agent with a trajectory $P = \{p_1, p_2, \ldots \} \subset \mathbb{R}^n$ ending at stand still.
    If the convex hull of the obstacle and the convex hull of the trajectory positions do not intersect, i.e., $\text{conv}(O) \cap \text{conv}(P) = \emptyset$, 
    then there exists a choice $\theta$ and $\alpha$ such that the H-CBF $h(\mathbf{x},\theta)$ would have guaranteed the safety of $P$. 
\end{lemma}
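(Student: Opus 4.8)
The plan is to use the Separating Hyperplane Theorem to fix the orientation of the H-CBF once and for all, and then to exploit the fact that the trajectory ends at standstill in order to control the braking term $b_{ij}$.

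First, since $\text{conv}(\mathcal{O}_i)$ and $\text{conv}(\mathcal{P})$ are compact, convex, nonempty and disjoint, Theorem \ref{th_sep_hyperplane} applies. I would in fact take $\hat{\mathbf{n}}(\theta)$ to be the unit vector along the minimum-distance segment between the two hulls, so that there is a gap $d>0$ with $\hat{\mathbf{n}}(\theta)^T(\mathbf{p}_k-\mathbf{x}^*)\ge d$ for every trajectory position $\mathbf{p}_k$, where $\mathbf{x}^*$ is the supporting point of $\text{conv}(\mathcal{O}_i)$ in the direction $\hat{\mathbf{n}}(\theta)$ (the margin $\delta$ is absorbed by first inflating the obstacle by $\delta$, which keeps it compact and convex). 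By the preceding Lemma, this direction yields a genuine supporting-hyperplane H-CBF $h(\mathbf{x},\theta)$ of the form (\ref{eq_hij})--(\ref{eq_bij}) with this \emph{fixed} normal and fixed touching point $\mathbf{x}^*$.

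The central step is to show $h(\mathbf{x},\theta)\ge 0$ all along $P$. Projecting the motion onto the fixed axis $\hat{\mathbf{n}}(\theta)$, write $s=\hat{\mathbf{n}}(\theta)^T\mathbf{p}$ and $\dot s=\hat{\mathbf{n}}(\theta)^T\mathbf{v}$; since $|\ddot s|=|\hat{\mathbf{n}}(\theta)^T\mathbf{u}|\le\|\mathbf{u}\|\le u_{\max}$, the projected motion is a one-dimensional double integrator with the same acceleration bound. When $\dot s\ge 0$ the braking term vanishes and $h\ge 0$ follows from the gap. When $\dot s<0$, I would let $t'$ be the first later instant at which $\dot s=0$; such a $t'$ exists precisely because $P$ ends at standstill. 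On $(t,t')$ the robot moves monotonically toward the obstacle, and because no admissible control can decelerate faster than $u_{\max}$, the distance covered before stopping is at least $\dot s(t)^2/(2u_{\max})=b$. Since the stopped position $s(t')$ still clears the (inflated) obstacle, this gives $s(t)-\hat{\mathbf{n}}(\theta)^T\mathbf{x}^*-\delta\ge b$, which is exactly $h(\mathbf{x}(t),\theta)\ge 0$.

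It remains to pick $\alpha$ so that the CBF inequality $\dot h\ge-\alpha(h)$ holds along $P$, certifying that the CBF-QP with this $\theta$ would have admitted the controls of $P$ as a safe trajectory. At any instant where $h=0$, the bound above shows this is a minimum of $h$ along $P$, so $\dot h\ge 0=-\alpha(0)$ automatically; on the compact complementary set of times where $h>0$ the quantity $-\dot h$ is bounded, so any class $\mathcal{K}_\infty$ function growing fast enough dominates it. Assembling these into a single strictly increasing $\alpha$ with $\alpha(0)=0$ finishes the argument. I expect the main obstacle to be the braking-distance step together with the behaviour of $\alpha$ near $h=0$: the braking bound hinges on projecting onto a single fixed direction (so the one-dimensional double integrator is exact) and on the standstill hypothesis to guarantee a future stopping time, while the $\alpha$ construction needs $-\dot h\to\le 0$ as $h\to 0$, which is exactly what the minimum-at-$h=0$ observation supplies. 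Some care is also needed at the kink of $b$ where $\dot s=0$, at which $h$ is only one-sided differentiable.
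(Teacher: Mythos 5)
Your proposal follows the paper's own proof exactly: Theorem~\ref{th_sep_hyperplane} supplies the fixed separating normal, the standstill hypothesis combined with the acceleration bound shows the velocity is always coverable by the braking term (so $h\ge 0$ along $P$), and $\alpha$ is then chosen large enough to satisfy (\ref{eq_CBF_constr}). The paper compresses these three steps into one sentence each, so your projected one-dimensional braking computation and your explicit construction of $\alpha$ near $h=0$ are simply correct elaborations of what the paper leaves implicit.
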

\begin{proof}
    By Theorem \ref{th_sep_hyperplane}, there exists a hyperplane separating the obstacle from the trajectory.
    By choosing $\theta$ in Lemma~\ref{lemma:normal} such that the normal $\hat{\mathbf{n}}(\theta)$ is equal to the normal of this hyperplane, the H-CBF will also be a separating hyperplane.
    Since the trajectory ends at a standstill and never intersects the hyperplane, the agent's velocity is always safe from a hyperplane collision point of view, i.e., $h(x)\geq 0$. 
    Now we need to show that (\ref{eq_CBF_constr}) is satisfied for the trajectory.
    Since the trajectory never intersects the hyperplane, we know that if $h(x)=0$ then the corresponding $u$ must have been such that $\dot h(x,u)=0$, which satisfies (\ref{eq_CBF_constr}).
    If $h(x)>0$, we can first pick an arbitrary class $\mathcal{K}_\infty$ $\alpha(x)$ function and then choose $\alpha_M(x)=M\alpha(x)$, with $M\in \mathbb{R}$ large enough to satisfy (\ref{eq_CBF_constr}) for $\alpha_M$.
   
\end{proof}

\begin{remark}
Note that Lemma \ref{lemma: trajectory} is sufficient for the existence of a \emph{single} H-CBF, while the core idea of the proposed approach is to iteratively find new H-CBFs throughout the execution. Thus, you might imagine a trajectory circling an obstacle, then there would (obviously) not be a single H-CBF proving safety, but the iterative choice of new H-CBFs during the execution could.
\end{remark}

The main motivation of this paper is that the LRH-CBF is less conservative than the commonly used OH-CBF, while requiring computations of the same order of magnitude.
To support this claim, we first present a detailed example and then a set of simulations.

\begin{figure}[t!]
    \centering
    \def\svgwidth{0.35\textwidth}\footnotesize
\begingroup%
  \makeatletter%
  \providecommand\color[2][]{%
    \errmessage{(Inkscape) Color is used for the text in Inkscape, but the package 'color.sty' is not loaded}%
    \renewcommand\color[2][]{}%
  }%
  \providecommand\transparent[1]{%
    \errmessage{(Inkscape) Transparency is used (non-zero) for the text in Inkscape, but the package 'transparent.sty' is not loaded}%
    \renewcommand\transparent[1]{}%
  }%
  \providecommand\rotatebox[2]{#2}%
  \newcommand*\fsize{\dimexpr\f@size pt\relax}%
  \newcommand*\lineheight[1]{\fontsize{\fsize}{#1\fsize}\selectfont}%
  \ifx\svgwidth\undefined%
    \setlength{\unitlength}{510.23622047bp}%
    \ifx\svgscale\undefined%
      \relax%
    \else%
      \setlength{\unitlength}{\unitlength * \real{\svgscale}}%
    \fi%
  \else%
    \setlength{\unitlength}{\svgwidth}%
  \fi%
  \global\let\svgwidth\undefined%
  \global\let\svgscale\undefined%
  \makeatother%
  \begin{picture}(1,0.58333333)%
    \lineheight{1}%
    \setlength\tabcolsep{0pt}%
    \put(0,0){\includegraphics[width=\unitlength,page=1]{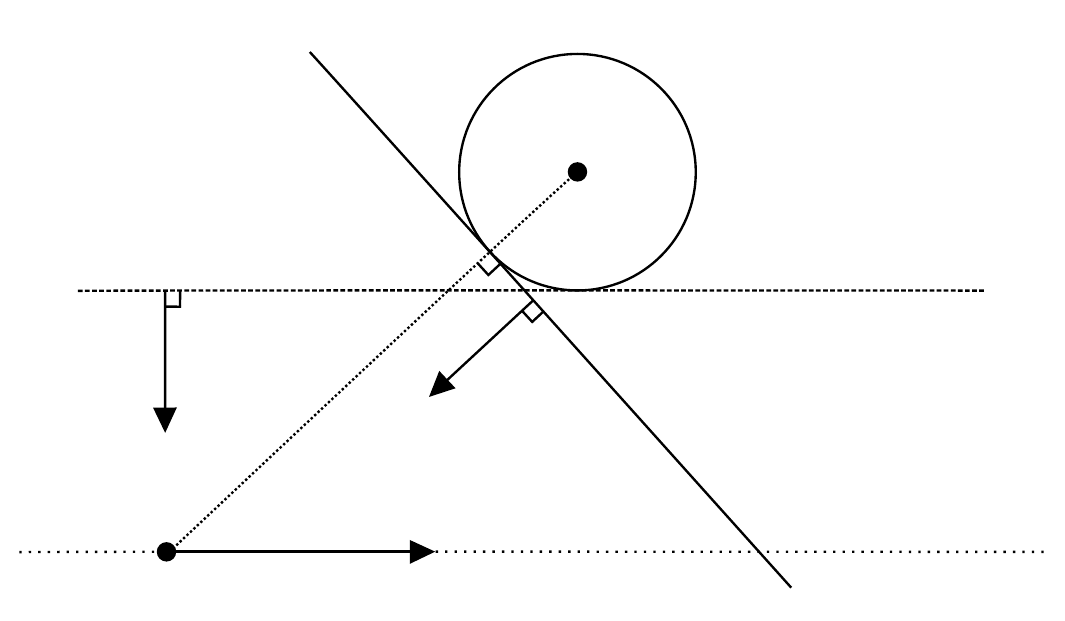}}%
    \put(0.38851073,0.09260411){\color[rgb]{0,0,0}\makebox(0,0)[lt]{\lineheight{1.25}\smash{\begin{tabular}[t]{l}$\mathbf{v}_A$\end{tabular}}}}%
    \put(0.40996632,0.17434629){\color[rgb]{0,0,0}\makebox(0,0)[lt]{\lineheight{1.25}\smash{\begin{tabular}[t]{l}$\hat{\mathbf{n}}_1$\end{tabular}}}}%
    \put(0.18041638,0.1883973){\color[rgb]{0,0,0}\makebox(0,0)[lt]{\lineheight{1.25}\smash{\begin{tabular}[t]{l}$\hat{\mathbf{n}}_2$\end{tabular}}}}%
    \put(0.11246545,0.0976231){\color[rgb]{0,0,0}\makebox(0,0)[lt]{\lineheight{1.25}\smash{\begin{tabular}[t]{l}$A$\end{tabular}}}}%
    \put(0.5262444,0.45271168){\color[rgb]{0,0,0}\makebox(0,0)[lt]{\lineheight{1.25}\smash{\begin{tabular}[t]{l}$O$\end{tabular}}}}%
  \end{picture}%
\endgroup%

    \caption{The geometry of Example 1. The agent $A$ moves along a trajectory with constant velocity $\mathbf{v}_A$, safely passing the static obstacle $O$. At the current time, $\hat{\mathbf{n}}_1$ defines a plane orthogonal to the vector connecting $A$ and $O$, while $\hat{\mathbf{n}}_2$ defines a plane which is parallel to $\mathbf{v}_A$.}
    \label{fig: benefit example}
\end{figure}

\begin{example}\label{exp: Lemma}
    Consider an agent $A$ at position $\mathbf{p}_A$ moving at a velocity $\mathbf{v}_A$, and a stationary obstacle $O$ at position $\mathbf{p}_O$, as in Figure \ref{fig: benefit example}. Assume that the current desired control is $\mathbf{u}_{\text{des}} = 0$, and that the control is limited by $||\mathbf{u}|| < u_{\text{max}}$. In Figure \ref{fig: benefit example}, $A$ is moving towards $O$ on a trajectory that will not collide with $O$, as long as $\mathbf{u}=0$. In regard to the OH-CBF defined by $\hat{\mathbf{n}}_1$, the $h$-value is given by
    \begin{equation}
        h(\mathbf{x}) = \hat{\mathbf{n}}_1^T(\mathbf{p}_A-\mathbf{p}_O)-\delta - \frac{(\hat{\mathbf{n}}_1^T\mathbf{v}_{A})^2}{2u_{\text{max}}}.
    \end{equation}
    Note that if $||\mathbf{v}_A||$ is large enough in comparison with $u_{\text{max}}$, the last term will switch the sign of $h$. Since a state that gives a negative $h$-value is considered part of the unsafe set, this means that the state of $A$ in Figure \ref{fig: benefit example} is not allowed by an OH-CBF, even though the state is obviously safe. In contrast, the $h$-value for $A$ in regard to the hyperplane defined by $\hat{\mathbf{n}}_2$ is given by
    \begin{equation}
        h(\mathbf{x}) = \hat{\mathbf{n}}_2^T(\mathbf{p}_A-\mathbf{p}_O)-\delta.
    \end{equation}
    With this choice of halfplane, all magnitudes  $||\mathbf{v}_A||$ are allowed, as long as the direction of $\mathbf{v}_A$ is given by Figure~\ref{fig: benefit example}. Note that moving past an obstacle that is still partially in front of an agent, as in this example, is a common situation in environments where the task requires moving close to obstacles. 
A simulation of the example above is illustrated in Figures \ref{fig: Exp A spatial} and \ref{fig: Exp A constraint}.
\end{example}

In the remainder of this section, we discuss different parameterisations of the H-CBF $h(\mathbf{x},\theta)$.
First, we present the general parametrisation of the H-CBF, then practical details for handling obstacles of arbitrary shape, and finally the special cases of ellipses and circles.

\subsection{H-CBFs for Different Obstacle Shapes}


The H-CBF, and in particular $\delta(\theta)$, is obviously dependent on the obstacle shape, but before we go into details we note that the obstacle $O$ below is assumed to be a configuration space obstacle \cite{lavallePlanningAlgorithms2006} in the sense that it takes the shape of the agent into account (e.g. if the agent has a given radius, the obstacle is inflated by the corresponding amount and the agent considered as a point).

\begin{lemma}
    For a closed, bounded obstacle $O\subset \mathbb{R}^n$, at position $P_O$, with non-empty interior,  we can find an H-CBF (\ref{eq_hij}) with orientation $n(\theta)$ by solving
     \begin{equation}
     \label{eq_delta}
        \delta(\theta) = \max_{\mathbf{p} \in \text{conv}(O)} \hat{ \mathbf{n}}(\theta)^T \mathbf{p},
 \end{equation}
 where $\mathbf{p}$ is given in a coordinate system centered at $\mathbf{p}_O$.
\end{lemma}
\begin{proof}
    From Theorem \ref{th_every_direction} we know that there exists a point $\mathbf{p}^*$ and a  supporting hyperplane for every direction $\hat{ \mathbf{n}}(\theta)$ such that
 \begin{equation}
        \hat{ \mathbf{n}}(\theta)^T \mathbf{p}^* = \max_{\mathbf{p} \in S} \hat{ \mathbf{n}}(\theta)^T \mathbf{p}
 \end{equation}
 Let $\hat{ \mathbf{n}}_\perp(\theta)$ be orthogonal to $\hat{ \mathbf{n}}(\theta)$, then we can compose
 $\mathbf{p}^*$ out of two components $\mathbf{p}^*= \delta(\theta)  \hat{ \mathbf{n}}(\theta) +\gamma(\theta) \hat{ \mathbf{n}}_\perp(\theta)$, where $\delta(\theta)$ is the distance from $\mathbf{p}^*$ to a hyperplane orthogonal to $\hat{ \mathbf{n}}(\theta)$ passing through $\mathbf{p}_O$. Letting $S=\text{conv}(O)$ and $\mathbf{p}^*$ as above we get 
  $\max_{\mathbf{p} \in \text{conv}(O)} \hat{ \mathbf{n}}(\theta)^T \mathbf{p}= \hat{ \mathbf{n}}(\theta)^T \mathbf{p}^*= \delta(\theta)$ as above.
\end{proof}

\begin{remark}
    Note that the optimisation in (\ref{eq_delta}) only needs to be solved once for each $\theta$, for a given obstacle, unlike the optimisation in Problem \ref{prob_P_CBF_QP}, which is solved each time step.
\end{remark}

\begin{lemma}[Polygonal Boundaries]
    For polygonal obstacles, the supporting hyperplane has to intersect at least one vertex, thus the optimisation simplifies to 
     \begin{equation}
        \delta(\theta) = \sup_{\mathbf{p} \in \{p_1, p_2, \ldots, p_K\}} \hat{ \mathbf{n}}(\theta)^T \mathbf{p},
 \end{equation}
 where $\{p_1, p_2, \ldots, p_K\}$ is the vertex set.
\end{lemma}
\begin{proof}
    Direct substitution.
\end{proof}

\begin{lemma}[Parameterised Boundaries]
    If a parameterisation of the boundary of $conv(O)$ is available in terms of \begin{equation}
    \partial \text{conv}(O) = \{\mathbf{x} \vert \mathbf{x} = \mathbf{p}_O + \hat{\mathbf{n}}(\phi) r_O(\phi), \phi \in [0, 2\pi)\}
\end{equation}
where $\mathbf{p}_O \in \text{conv}(O)$,
and $r_O:  \mathbb{R} \to \mathbb{R}$ is the radial distance, see Figure \ref{fig: obstacle}, then we have
\begin{equation} \label{eq: ang safety}
    \delta(\theta) = \max_\phi\{\hat{\mathbf{n}}(\theta)^T\hat{\mathbf{n}}(\phi)r_O(\phi)\}.
\end{equation}

\end{lemma}
\begin{proof}
    Direct substitution.
\end{proof}



\begin{figure}[t!]
    \centering
    \def\svgwidth{0.45\textwidth}\footnotesize
    \import{Images/}{obstacle.pdf_tex}
    \caption{The geometrics of a general 2D obstacle. The inner white area is the obstacle $\mathcal{O}$, and combined with the light grey areas, we get $\text{conv}(\mathcal{O})$. Notice how the braking distance $b$ added to the supporting hyperplane defines the boundary of the set of safe positions, given the current velocity $\mathbf{v}_A$.}
    \label{fig: obstacle}
\end{figure}



\begin{lemma}[Ellipse Boundaries]
    For an ellipse given in the form
    \begin{equation} \label{eq_ellipse}
    \textbf{r}(\gamma) = 
        \begin{bmatrix}
            a \ \text{cos}(\gamma), & 
            b \ \text{sin}(\gamma)
        \end{bmatrix},
    \end{equation}
    where $a$ is the major axis, $b$ is the minor axis, and the angle $\gamma$ is called the eccentric anomaly, the 
    optimisation problem in \eqref{eq: ang safety} has the solution
    \begin{equation}
        \delta(\theta) = a \ \text{cos}(\theta)\text{cos}(\gamma_{\text{max}}) + b \ \text{sin}(\theta)\text{sin}(\gamma_{\text{max}}), 
        \label{eq: delta_ellips}
    \end{equation}
    where $\gamma_{\text{max}} = \text{arctan2}(b \ \text{sin}(\theta), a \ \text{cos}(\theta))$.
\end{lemma}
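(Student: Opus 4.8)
The plan is to recognize $\delta_j(\theta)$ in \eqref{eq: ang safety} as the support function of the ellipse in the direction $\hat{\mathbf{n}}(\theta)$, and then to evaluate it by a single one-dimensional maximization over the eccentric anomaly $\gamma$. Taking $\mathbf{p}_o$ to be the ellipse center, the vector $\hat{\mathbf{n}}(\phi)r(\phi)$ from the center to a boundary point is exactly the position vector of a point on $\partial\,\text{conv}(\mathcal{O})$, and the objective $\hat{\mathbf{n}}(\theta)^T\hat{\mathbf{n}}(\phi)r(\phi)$ is just the signed projection of that point onto $\hat{\mathbf{n}}(\theta)$.

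The first key observation I would make is that the polar representation $\phi \mapsto \hat{\mathbf{n}}(\phi)r(\phi)$ and the eccentric-anomaly representation $\gamma \mapsto \mathbf{r}(\gamma)$ in \eqref{eq_ellipse} trace out the very same closed boundary curve (one is a monotone reparameterization of the other, since the center lies inside). Because the maximum of a continuous function over a curve depends only on the curve's image and not on how it is parameterized, I may replace the maximization over $\phi$ by one over $\gamma$, giving
\begin{equation}
    \delta_j(\theta) = \max_\gamma \hat{\mathbf{n}}(\theta)^T \mathbf{r}(\gamma) = \max_\gamma \big( a\cos\theta\cos\gamma + b\sin\theta\sin\gamma \big).
\end{equation}
I would then solve this unconstrained scalar maximization by stationarity: differentiating in $\gamma$ and setting the result to zero yields $-a\cos\theta\sin\gamma + b\sin\theta\cos\gamma = 0$, i.e. $\tan\gamma = b\sin\theta/(a\cos\theta)$. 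Substituting the maximizing $\gamma$ back into the objective produces the claimed closed form \eqref{eq: delta_ellips}.

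I expect the only genuinely subtle step to be the branch selection, since $\tan\gamma = b\sin\theta/(a\cos\theta)$ admits two solutions on $[0,2\pi)$ differing by $\pi$ — one the maximizer, one the minimizer. I would argue that $\gamma_{\max} = \text{arctan2}(b\sin\theta, a\cos\theta)$ selects the maximizer because it forces $\cos\gamma_{\max}$ to share the sign of $a\cos\theta$ and $\sin\gamma_{\max}$ to share the sign of $b\sin\theta$; both terms of the objective are then nonnegative and sum to $\sqrt{(a\cos\theta)^2+(b\sin\theta)^2}\ge 0$, whereas the other critical point gives the negative of this value. Equivalently, the second derivative $-(a\cos\theta\cos\gamma + b\sin\theta\sin\gamma)$ is negative at $\gamma_{\max}$, confirming a maximum. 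Everything beyond this sign bookkeeping is a routine substitution.
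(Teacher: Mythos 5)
Your proposal is correct and follows essentially the same route as the paper's proof: insert the eccentric-anomaly parameterization \eqref{eq_ellipse} into \eqref{eq: ang safety}, set $\frac{\partial}{\partial\gamma}\bigl(\hat{\mathbf{n}}(\theta)^T\mathbf{r}(\gamma)\bigr) = -a\cos(\theta)\sin(\gamma) + b\sin(\theta)\cos(\gamma) = 0$, and read off $\gamma_{\text{max}}$ via $\text{arctan2}$. You additionally justify two steps the paper leaves implicit --- that maximizing over $\gamma$ is legitimate because it reparameterizes the same boundary curve as $\phi$, and that $\text{arctan2}(b\sin\theta, a\cos\theta)$ selects the maximizing rather than the minimizing critical point (via the sign bookkeeping and the second derivative) --- which strengthens rather than departs from the paper's argument.
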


\begin{proof}
    Inserting \eqref{eq_ellipse} into \eqref{eq: ang safety}, we get
        $\delta(\theta) = \max_{\gamma}\{\hat{\mathbf{n}}(\theta)^T\textbf{r}(\gamma)\}$.
    Setting the derivative to zero gives
        $\frac{\partial}{\partial \gamma}(\hat{\mathbf{n}}(\theta)^T\textbf{r}(\gamma)) 
        = -\text{cos}(\theta) a \text{sin}(\gamma) + \text{sin}(\theta)b\text{cos}(\gamma) =0,$ 
    which in turn gives the
    $\gamma_{\text{max}}$ above,
    corresponding to (\ref{eq: delta_ellips}).
\end{proof}
The result can be extended to an arbitrary ellipse rotated by an angle $\beta$.

\subsection{Closed Form Overapproximations of Delta}
If the set of considered orientations $\Theta$ of the H-CBF  is finite,  the expressions above for $\delta(\theta)$ can be precomputed for all $\theta \in \Theta$ for each obstacle.
However, if $\Theta$ is a continuous interval, you might want a smooth closed-form approximation of $\delta(\theta)$ to avoid computing \eqref{eq_delta} for each $\theta$.

One way of creating such an approximation that is also periodic (and continuous when $\theta$ goes from $2\pi$ to $0$) is a truncated Fourier series,
\begin{equation}
    \Tilde{\delta}(\theta) = \frac{a_0}{2} + \sum^{N}_{n = 1}(a_n\text{cos}(n \theta) + b_n \text{sin}(n \theta)) + c,
\end{equation}
where $c$ is a scalar safety margin making sure the approximation is always an over-approximation, as is needed to guarantee safety.

\subsection{Discrete Time System CBFs}

To retain the safety guarantees for discrete-time systems, D-CBFs are needed \cite{agrawal_discrete_2017}. The relation between CBFs and D-CBFs, using the $h$-function in \eqref{eq_hij}, is described in the following lemma:

\begin{lemma}[D-CBF]
    With the $h$-function in \eqref{eq_hij}, the corresponding D-CBF constraint becomes
    \begin{equation} \label{eq: DI DCBF}
        \dot{h}(\mathbf{x}_k) - \Delta t \frac{(\hat{\mathbf{n}}(\theta)^T(\mathbf{u}_k))^2}{2 u_\text{max}} \geq -\alpha( h(\mathbf{x}_k)),
    \end{equation}
    where $\alpha(x) = x\gamma / \Delta t$, and $h$ and $\dot{h}$ are the same as in the continuous case.
\end{lemma}

\begin{proof}
    Calculate the progression of the state after one timestep,
    \begin{align} \label{eq: Proof DI}
        h(\mathbf{x}_{k + 1}) & =  h(\mathbf{f}(\mathbf{x}_{k}) + \mathbf{g}(\mathbf{x}_k) \mathbf{u}_k) \nonumber \\ & = \hat{\mathbf{n}}(\theta)^T(\mathbf{p}_{A, k} + \Delta t \mathbf{v}_{A,k} - \mathbf{p}_{O,k} - \Delta t \mathbf{v}_{O, k}) \nonumber \\ & -\delta - \frac{(\hat{\mathbf{n}}(\theta)^T(\mathbf{v}_{A,k} + \Delta t \mathbf{u}_{A,k} - \mathbf{v}_{O,k}))^2}{2u_{\text{max}}} \nonumber \\
        & = h(\mathbf{x}_k) + \Delta t \dot{h}(\mathbf{x}_k) - \frac{(\Delta t \hat{\mathbf{n}}(\theta)^T(\mathbf{u}_k))^2}{2 u_\text{max}}.
    \end{align}
    Inserting \eqref{eq: Proof DI} into \eqref{eq: DCBF alt cnstr} results in \eqref{eq: DI DCBF}.
\end{proof}

Notice how the only difference between the continuous CBF and the D-CBF is the addition of the negative square term. First, this means that the problem reduces to a QCQP for a fixed $\theta$. Secondly, this means that the CBF constraint will be slightly more conservative than a naive Euler approximation of \eqref{eq: cont cons CBF}, as it decreases the value on the left side.

\subsection{Choosing $\Theta$} \label{sub: theta}

The choice of $\Theta$ can be made in many ways, but in this section, we provide some observations that might be useful when making a decision.
First, it is noted that any $\theta \notin (\theta_{OH} - \pi / 2, \theta_{OH} + \pi / 2)$ will automatically break the condition \eqref{eq_feasible}, since that would render the agent and the obstacle on the same side of the halfplane. In practice, the zone of valid $\theta$ choices is even smaller, but it depends on the obstacle size and distance. Second, since $\theta_{OH}$ marks the direction of the closest point of the obstacle, it might make sense to pick the rest of the options symmetrically around $\theta_{OH}$.
For an infinite $\Theta$, it might thus make sense to pick the interval 
$$\Theta= (\theta_{OH} - \pi / 2, \theta_{OH} + \pi / 2).$$

 For a finite $\Theta$, we want to numerically explore the effect of the number of elements $N_\theta$, and the maximum spread,  $\sigma_\theta$, measured from $\theta_{OH}$.
   Thus we let
\begin{equation}
    \Theta_{N_\theta}^{\sigma_\theta} = \{\theta_{OH} - \sigma_\theta, ..., \theta_{OH}, ..., \theta_{OH} + \sigma_\theta, \theta_\text{old} \},
\end{equation}
where the $N_\theta$ halfplanes are equally spread between $\theta_{OH} - \sigma_\theta$ and $\theta_{OH} + \sigma_\theta$, and $\theta_\text{old}$ is the previous choice of $\theta$. 
Note that including $\theta_\text{old}$ gives a recursive safety guarantee.
If the last choice of $\theta$ and $u$ were safe, then the same $\theta$ (i.e. $\theta_\text{old}$)  will have a safe $u$ in this time step, and Problem \ref{prob_P_CBF_QP} will have a feasible solution.

\section{Simulations}
\label{section: Experiments}

In this section, the performance of the proposed approach is illustrated through four examples of increasing complexity using a 2D double integrator with actuation constraints. 
As described above, the contribution of this paper is an approach that has a computational complexity that is very low, and comparable to the highly popular OH-CBF  \cite{borrmann_control_2015, wang_safety_2017, molnar_collision_2025, funada_collision_2025, thyri_reactive_2020}, while giving better performance due to being less restrictive.

Thus, we compare the Continuous and Discrete LRH-CBFs (C- and D-LRH-CBFs) with different choices of $\Theta$, as well as with the OH-CBF from \cite{borrmann_control_2015}. The simulations are conducted in Python, and continuous numerical optimisation over $\theta$ and $\mathbf{u}$ for the C-LRH-CBF is solved using CasADi \cite{andersson_casadi_2019}. 

 \begin{figure}[!t]
    \centering
    \includegraphics[width=0.99\columnwidth]{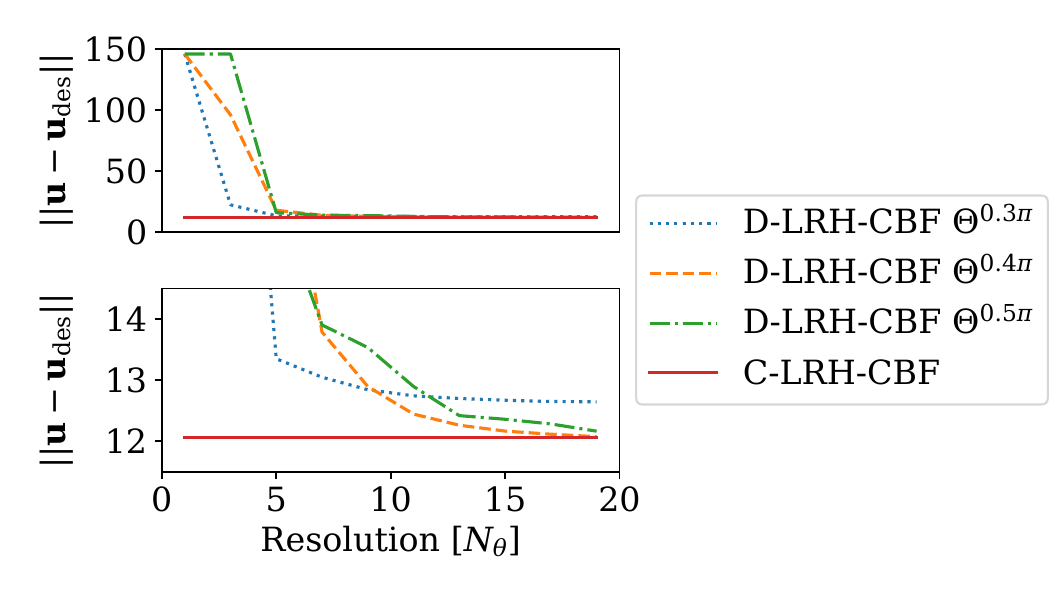}
    \caption{The aggregate error over 10000 random states in the proximity of a circular obstacle, as a function of $N_\theta$. The continuous solution is used as a baseline, and does not depend on the resolution $N_\theta$. The upper plot shows the full range in $||\mathbf{u} - \mathbf{u}_\text{des}||$, while the lower shows a zoomed-in version. Note that no improvement is gained when going from $N_\theta = 1$ to $N_\theta = 3$ for $\Theta^{0.5 \pi}$, which is an expected result from Section \ref{sub: theta}. Also note that by increasing the resolution from $N_\theta = 1$ quickly decreases $||\mathbf{u} - \mathbf{u}_\text{des}||$, showing the benefit of LRH-CBF over OH-CBF (the OH-CBF corresponds to $N_\theta = 1$). At resolutions $N_\theta < 9$, a smaller spread of $\Theta^{0.3 \pi}$ performs best, while at higher resultions the wider spread $\Theta^{0.4 \pi}$ performs best.}
    \label{fig: num res diff}
    \end{figure}

\begin{example} \label{exp: num res}
    In this example, the influence of the choice of $\Theta$ on the gap $||\mathbf{u} - \mathbf{u}_\text{des}||$ is investigated. To assess performance, 10000 states and desired controls were randomly sampled in the vicinity of a circular obstacle. Three choices of the spread $\sigma_\theta$ for the D-LRH-CBF were examined over a range of resolutions $N_\theta$, and plotted alongside the solution from the C-LRH-CBF, used as a baseline for comparison. The metrics that were considered were the aggregated error $||\mathbf{u} - \mathbf{u}_\text{des}||$ and the mean computation time over 10000 samples. The results are presented in Figures \ref{fig: num res diff} and \ref{fig: num res time}. From Figure \ref{fig: num res diff}, it is apparent that a significant improvement is achieved when increasing the resolution with only a few halfplanes. This implies that without much additional computational time, the difference $||\mathbf{u} - \mathbf{u}_\text{des}||$ is greatly reduced compared to the OH-CBF (which corresponds to $N_\theta = 1$). Looking at the zoomed-in performance in the lower part of Figure \ref{fig: num res diff}, it is apparent that the choice of $\sigma_\theta$ has a major impact on convergence with the continuous solution. While a small spread $\sigma_\theta = 0.3 \pi$ is good for resolutions $N_\theta \leq 9$, better performance is achieved with $\sigma_\theta = 0.4 \pi$ for resolutions $N_\theta > 9$. The time complexity, shown in Figure \ref{fig: num res time},  is linear in the choice of $N_\theta$, as expected. We also note that the approach is indeed quite fast, using an Intel Core Ultra 9 185H with WSL, the time per step is approximately $6.8  N_\theta \times  10^{-6}$ s. For the continuous solution, each step takes about $7.2 \times 10^{-3}$ s.

    
    \begin{figure}[!t]
    \centering
    \includegraphics[width=0.99\columnwidth]{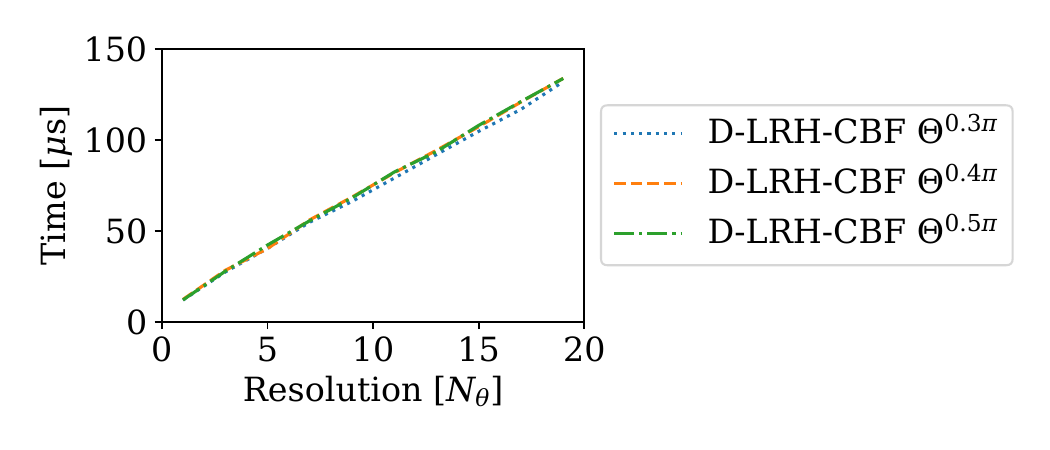}
    \caption{The mean computational time over 10000 steps as a function of $N_\theta$. At max resolution, $N_\theta = 19$, a step takes about 130 $\mu$s. As the relationship between time and $N_\theta$ is linear and independent of $\sigma_\theta$, the expected time of a single step is approximately $6.8  N_\theta$ $\mu$s. The mean time to solve a step with the continuous solver is 7.2 ms, too long to be shown in the same plot.}
    \label{fig: num res time}
    \end{figure}
\end{example}

\begin{example}\label{exp: A}
    In this example, which is a numerical version of Example \ref{exp: Lemma}, the agent moves on a safe straight line with constant velocity, passing a static obstacle, as seen in Figure~\ref{fig: Exp A spatial}.
    Parts of the path are, however, \emph{not considered safe} (feasible) with respect to an OH-CBF, as seen in Figure \ref{fig: Exp A constraint}.
    For $t\in [1.5, 2.2]$, the $h$-value is negative, which means that the state is considered unsafe.
    No matter the choice of $\alpha$, this safe trajectory can not be followed using an OH-CBF.
    In contrast, the $h$-value for a set of LRH-CBFs remains above zero throughout the trajectory, which implies that the trajectory can be followed given the right choice of $\alpha$.
    This shows that by optimising over $\theta$, the resulting H-CBF is indeed less restrictive.

    \begin{figure}[!t]
    \centering
    \includegraphics[width=0.99\columnwidth]{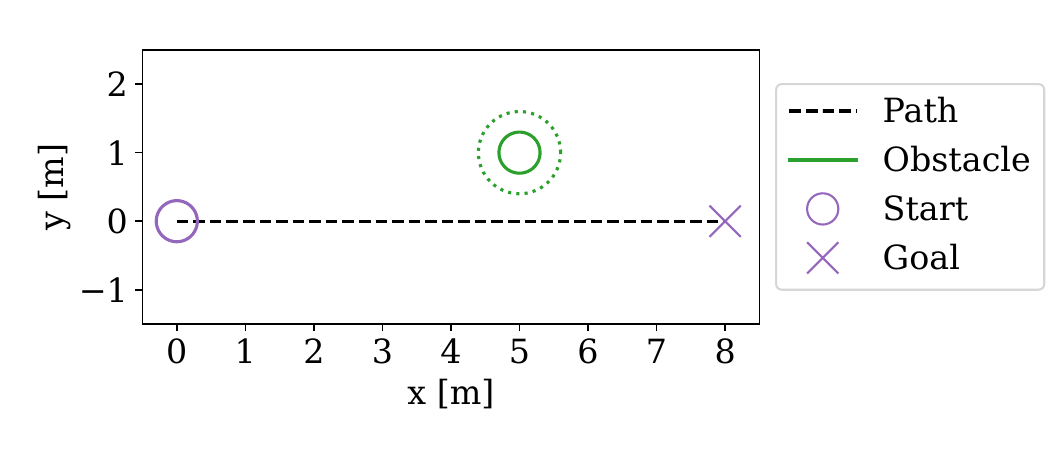}
    \caption{The setup of example \ref{exp: A}. 
    The agent moves on a straight line at constant velocity ($\mathbf{u}_{\text{des}} = 0$), safely passing an obstacle (the dotted circle corresponds to the configuration space of the obstacle). The corresponding $h$-value for the four different cases is shown in Figure \ref{fig: Exp A constraint}.
    }
    \label{fig: Exp A spatial}
    \end{figure}
    
    \begin{figure}[!t]
    \centering
    \includegraphics[width=0.99\columnwidth]{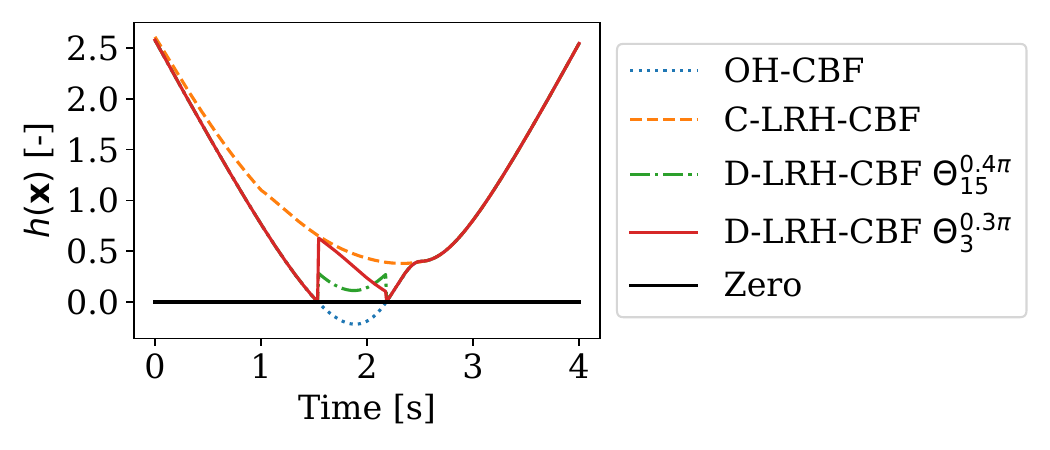}
    \caption{The $h$-value for the fixed trajectory of the agent in Example~\ref{exp: A}. 
    Note that this safe trajectory would not be allowed by the OH-CBF of \cite{borrmann_control_2015} (dotted), as the $h$-value is negative for $t\in [1.5, 2.2]$, right before passing the obstacle.
    This is independent of the choice of $\alpha$.
    However, since the $h$-value is positive during the entire trajectory for both the discrete and continuous implementations of the LRH-CBF (dashed, dash-dotted, and solid), there exists an $\alpha$ that allows the trajectory.
    Note that some of the LRH-CBFs only deviate from the OH-CBF when it is needed (to avoid restricting the control), i.e., when the $h$-value of OH-CBF turns negative around $t=1.5$.}
    \label{fig: Exp A constraint}
    \end{figure}
\end{example}

\begin{example} \label{exp: B}
    In this example, the OH-CBF and LRH-CBFs are compared to each other in a scenario with a single static obstacle blocking the direct path to the goal, as seen in Figure \ref{fig: Exp B spatial}, with corresponding data in Figure~\ref{fig: Exp B distance}. The agent starts at rest and is driven by a PD-controller that provides $\mathbf{u}_{\text{des}}$. 
    As can be seen from $||\mathbf{u}-\mathbf{u}_{\text{des}}||$ in Figure \ref{fig: Exp B distance} (upper), the OH-CBF starts to restrict acceleration earlier than the LRH-CBFs, enforces a larger gap $||\mathbf{u}-\mathbf{u}_{\text{des}}||$, and does so for a longer time. Furthermore, looking at the trails of the paths in Figure \ref{fig: Exp B spatial}, we see that the OH-CBF drives straight forward for longer, while slowing down, and starts turning fairly late, while the LRH-CBFs leaves the direct path earlier, maintains a higher speed (slope of goal distance plot), and arrives at the goal much sooner. The performance of the discrete and continuous LRH-CBFs is very similar.
    
    
    
    \begin{figure}[!t]
    \centering
    \includegraphics[width=0.99\columnwidth]{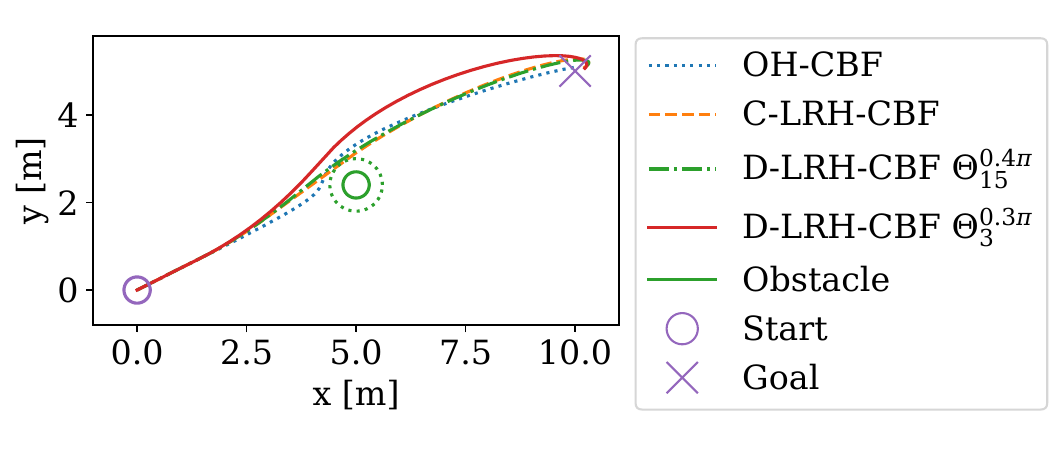}
    \caption{The setup of example \ref{exp: B}. The agent is driven towards the goal with a PD-controller, using the two types of H-CBFs. Note how all the LRH-CBFs leave the direct path to the goal early with a small correction of the path, while the OH-CBF results in an early and significant slowdown, and then turns closer to the obstacle. 
    As can be seen in Figure \ref{fig: Exp B distance}, the OH-CBF actually starts slowing down before the LRH-CBFs start turning. This is expected, as the LRH-CBFs are searching for a $\theta$ that minimises $||\mathbf{u}-\mathbf{u}_{\text{des}}||$.
}
    \label{fig: Exp B spatial}
    \end{figure}
    
    \begin{figure}[!t]
    \centering
    \includegraphics[width=0.99\columnwidth]{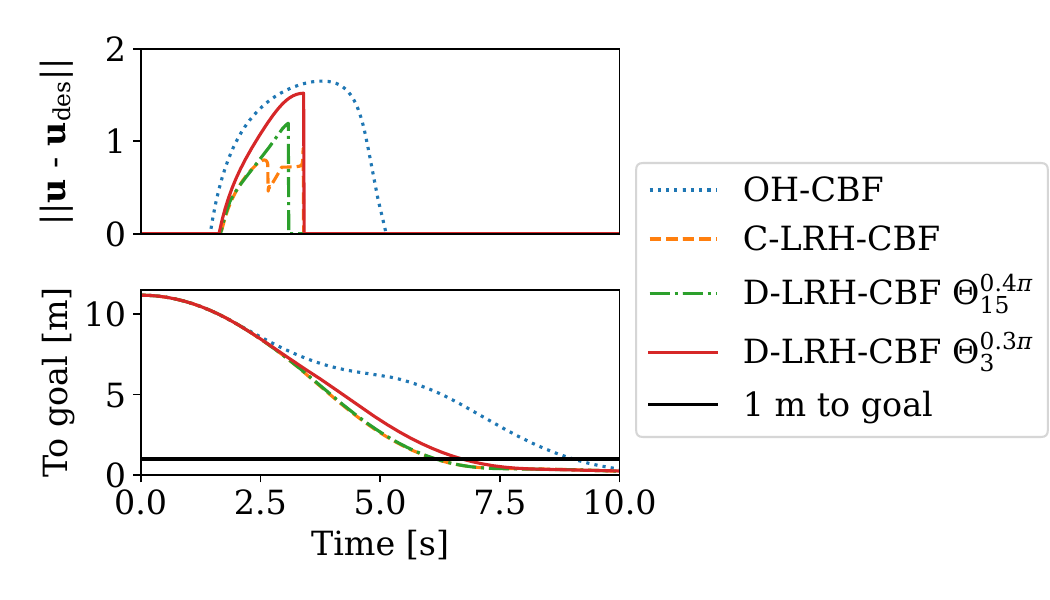}
    \caption{Upper: The difference between the desired control and the safe control $||\mathbf{u}-\mathbf{u}_\text{des}||$ over time. The OH-CBF comes into effect earlier and for a longer time, compared to three implementations of the LRH-CBF. Lower: The distance to the goal as a function of time. The OH-CBF is slowest, and arrives within 1 m of the goal in 9.0 s, while the three  LRH-CBFs do so between 6.1 and 7~s.}
    
    
    \label{fig: Exp B distance}
    \end{figure}
\end{example}

\begin{example} \label{exp: C}
    In this example, the agent encounters two static polygon-shaped obstacles, as well as an ellipse moving at constant velocity. The results for both the OH-CBF and the LRH-CBF are shown in Figure \ref{fig: Exp C}. Notice how the LRH-CBF enables the agent to move closer to the obstacles, and at a greater velocity compared to the OH-CBF.

    \begin{figure}[!t]
    \centering
    \includegraphics[width=0.99\columnwidth]{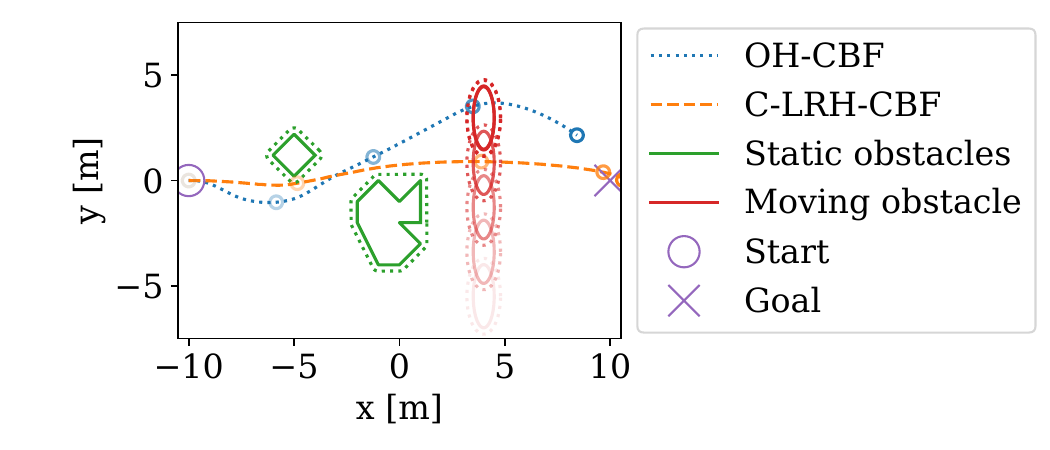}
    \caption{The trajectories in Example \ref{exp: C}, featuring moving and irregularly shaped obstacles. Concurrent snapshots of the agent and the moving obstacle are shown, with older ones increasingly opaque. Notice how the LRH-CBF is less conservative regarding the obstacles, resulting in a faster trajectory.}
    \label{fig: Exp C}
    \end{figure}
\end{example}

\section{Conclusion}
\label{section: Conclusion}
In all earlier work, the CBF is chosen before it is time to find a safe control. 
In this paper, we suggest selecting the CBF with the desired control in mind.
This way, we can find the least restrictive CBF with respect to what we want to achieve.
The advantages of the proposed approach were illustrated in five examples, including moving obstacles.

\addtolength{\textheight}{0cm}   


\bibliographystyle{IEEEtran}
\bibliography{CBFintro,MyLibraryPetter}



\end{document}